\definecolor{ourlightcyan}{RGB}{210, 245, 255}  
\definecolor{ourlightred}{RGB}{255, 220, 220}  
\definecolor{myRed}{rgb}{0.808,0.067,0.149}
\definecolor{myGreen}{rgb}{0.067,0.708,0.149}
\definecolor{commentGreen}{rgb}{0,0.5,0.05}
\definecolor{ForestGreen}{RGB}{34,139,34}
\colorlet{TableColor}{ourlightcyan}
\colorlet{MissingColor}{ourlightred}
\newtheorem{theorem}{Theorem}
\newcommand{\best}[1]{\textcolor{blue}{#1}}        
\newcommand{\secondbest}[1]{\textcolor{Orange}{#1}} 
\definecolor{cvprblue}{rgb}{0.21,0.49,0.74}
\title{SlotMatch: Distilling Object-Centric Representations for Unsupervised Video Segmentation}
\author{
    Diana-Nicoleta Grigore\textsuperscript{\rm 1,*},
    Neelu Madan\textsuperscript{\rm 2,*},
    Andreas M{\o}gelmose\textsuperscript{\rm 2},
    Thomas B. Moeslund\textsuperscript{\rm 2},\\
    Radu Tudor Ionescu\textsuperscript{\rm 1,$\diamond$}\\
    \textsuperscript{\rm 1}University of Bucharest, Romania, \quad \textsuperscript{\rm 2}Aalborg University, Denmark\\
    \textsuperscript{\rm *}Equal contribution. \textsuperscript{ $\diamond$}Corresponding author: raducu.ionescu@gmail.com.
}
\begin{document}

\maketitle

\begin{abstract}
Unsupervised video segmentation is a challenging computer vision task, especially due to the lack of supervisory signals coupled with the complexity of visual scenes. To overcome this challenge, state-of-the-art models based on slot attention often have to rely on large and computationally expensive neural architectures. To this end, we propose a simple knowledge distillation framework that effectively transfers object-centric representations to a lightweight student. The proposed framework, called \textsc{SlotMatch}, aligns corresponding teacher and student slots via the cosine similarity, requiring no additional distillation objectives or auxiliary supervision. The simplicity of \textsc{SlotMatch} is confirmed via theoretical and empirical evidence, both indicating that integrating additional losses is redundant. We conduct experiments on three datasets to compare the state-of-the-art teacher model, \textsc{SlotContrast}, with our distilled student. The results show that our student based on \textsc{SlotMatch} matches and even outperforms its teacher, while using $3.6\times$ less parameters and running up to $2.7\times$ faster. Moreover, our student surpasses all other state-of-the-art unsupervised video segmentation models. 
\end{abstract}


\vspace{-0.2cm}
\section{Introduction}
\vspace{-0.1cm}

A fundamental goal in machine perception is developing systems that, as humans, understand complex visual scenes as compositions of distinct objects. This capability, studied in the area of \emph{object-centric representation learning} \cite{Burgess-arxiv-2019,Greff-ICML-2019,Locatello-NeurIPS-2020}, is a critical step for building agents that can reason about, interact with, and understand their surroundings. 
Recent advances in self-supervised learning have produced powerful foundational models \cite{caron2021emerging,he2022masked,oquab2023dinov2}. When integrated into slot-based attention frameworks \cite{kipf-ICLR-2022,Locatello-NeurIPS-2020}, these models can discover and segment objects from complex scenes with remarkable fidelity. However, their success is predicated on their scale, as the high computational costs create bottlenecks for deployment. As such, typical self-supervised models are incompatible with resource-constrained environments, where object-centric representations built for video-related tasks would be most valuable.

To overcome this critical trade-off between performance and efficiency, we propose a knowledge distillation strategy tailored for object-centric models. Our method transfers the object discovery capabilities of a large \emph{teacher} model to a lightweight \emph{student}, aiming to create a final model that is not only more suitable for resource-constrained applications, but is also a more effective learner.

\begin{figure}
    \centering
    \includegraphics[width=1.0\linewidth]{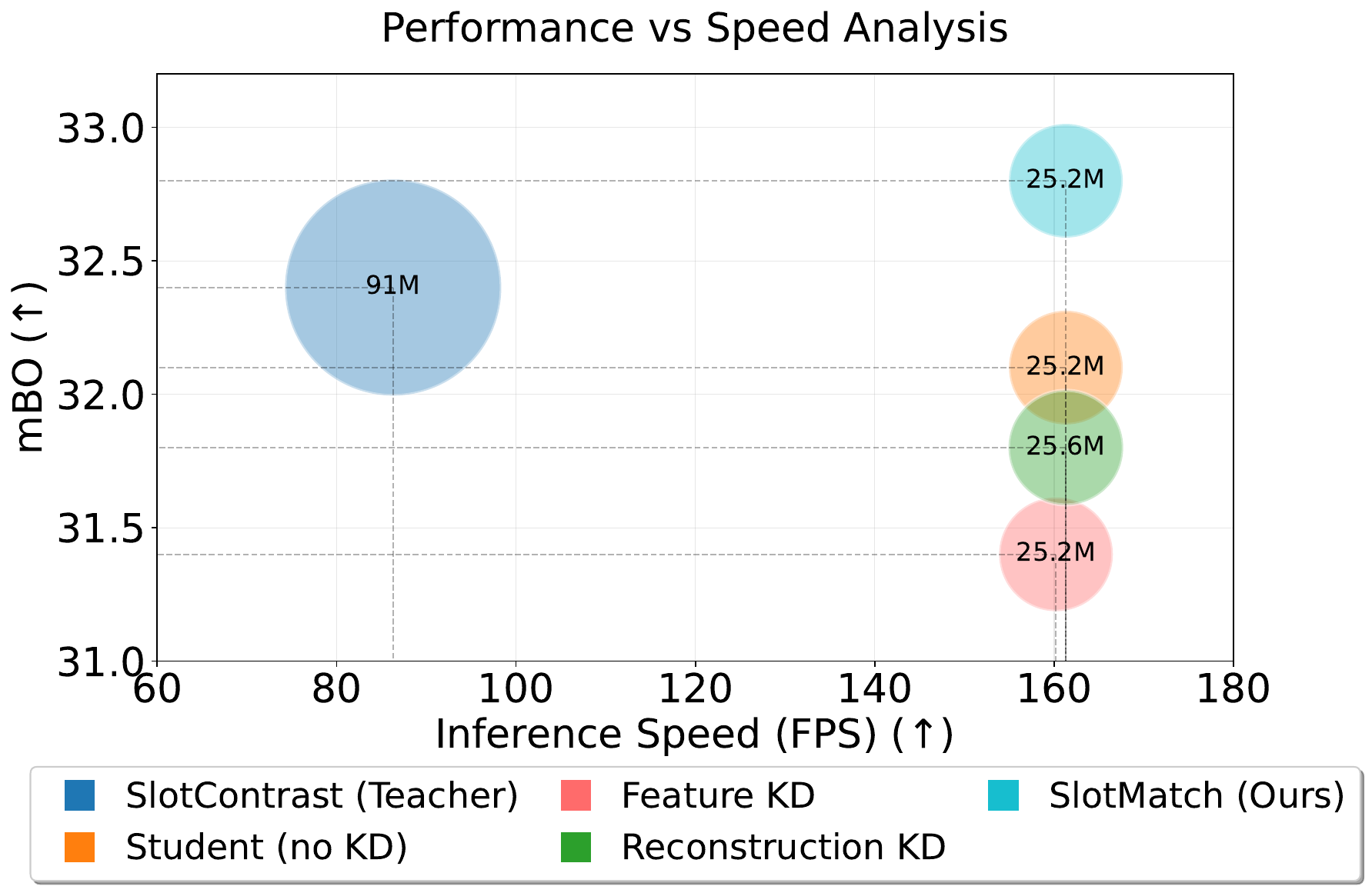}
    \vspace{-0.6cm}
    \caption{Comparison of \textsc{SlotContrast} (teacher) versus various student versions (including our \textsc{SlotMatch}), showing the trade-off between performance (mean Best Overlap or mBO) vs.~inference speed (FPS). Circle area indicates parameter count (in millions). \textsc{SlotMatch} (cyan) outperforms its teacher, while being nearly twice as fast on NVIDIA A100. Best viewed in color.}
    \vspace{-0.4cm}
    \label{fig:teaser}
\end{figure}

Our core contribution is a novel optimization strategy that operates directly on the learned object slots. We frame the primary distillation objective as a direct matching loss, termed \textsc{SlotMatch}, where a slot from the student model serves as the ``positive'' and the corresponding teacher slot provides the fixed (target) ``anchor''. This pulls the student's representation towards the teacher's proven semantic space. Crucially, the necessary repulsive force is not supplied by explicit negatives in the distillation loss itself. Instead, we rely on the student's slot contrastive loss, which is concurrently optimized with the distillation objective. This combination effectively decomposes the learning signal: the distillation loss teaches each slot \emph{what to be}, while the contrastive loss teaches it \textit{not to be} a redundant copy of other slots. We apply a similar principle to the reconstructed features, \ie~the student is trained to minimize the mean squared error between its input and output representations, without distilling representations from the teacher. We provide theoretical and empirical evidence confirming that our simple distillation objective is sufficient and effective. More precisely, we show that distilling output features is redundant.

We carry out experiments on three benchmark datasets for unsupervised video segmentation, MOVi-E \cite{ghorbani2021movi}, YTVIS-2021 \cite{vis2021} and DAVIS 2017 \cite{Perazzi-CVPR-2016}. We compare out student based on \textsc{SlotMatch} with its state-of-the-art teacher, \textsc{SlotContrast} \cite{Manasyan-CVPR-2025}, as well as other competitive methods from recent literature \cite{Aydemir-NeurIPS-2023, kipf-ICLR-2022, Singh-NeurIPS-2022,Zadaianchuk-NeurIPS-2023, Seitzer-arxiv-2022, kara2024diod}. The results indicate that our student outperforms state-of-the-art models on all datasets. Furthermore, our student contains $3.6\times$ less parameters and runs up to $2.7\times$ faster than its teacher model, \textsc{SlotContrast} (see Figure \ref{fig:teaser} and Table \ref{table:ablation_timings}). 


In summary, our contribution is threefold:
\begin{itemize}
    \item We introduce \textsc{SlotMatch}, a knowledge distillation framework to distill slot attention by minimizing the cosine similarity between corresponding teacher and student slots.
    \item We provide theoretical evidence indicating that our slot distillation procedure can effectively distill information from the teacher, without requiring additional losses.
    \item We empirically show that our student based on \textsc{SlotMatch} is both effective and efficient, surpassing its teacher in terms of multiple performance metrics, while being $1.9\times$ to $2.7\times$ faster and $3.6\times$ smaller. 
\end{itemize}

\vspace{-0.1cm}
\section{Related Work}
\vspace{-0.1cm}

\noindent \textbf{Object-centric representation learning.}
Unsupervised scene decomposition aims to discover object-based structure in raw perceptual inputs, without supervision. Early methods approached this task via perceptual grouping \cite{Greff-NeurIPS-2016}, spatial mixture models \cite{Greff-NeurIPS-2017}, or foreground-background separation \cite{Yang-ICCV-2021}. Recent approaches focused on \emph{slot attention} \cite{Locatello-NeurIPS-2020}, which uses iterative attention to bind latent learnable vectors, called \emph{slots}, to individual objects. Since its introduction, the paradigm has been extended to incorporate real-world images \cite{Seitzer-arxiv-2022}, auto-regressive decoder with patch-order permutation \cite{kakogeorgiou2024spot}, adaptive slot counts \cite{Fan-CVPR-2024}, learned-query initialization \cite{Jia-ICLR-2023}, language-controlled slots \cite{didolkar2025ctrl} and scale-invariant pipelines \cite{Biza-ICML-2023}. While all of these have shown strong results in image-based object discovery, their adaptation to video requires additional mechanisms to ensure temporal consistency.

\noindent \textbf{Object-centric video models.} Slot attention was originally extended to video through methods such as SAVi \citep{kipf-ICLR-2022} and SAVi++ \citep{Elsayed-NeurIPS-2022}, which introduced cross-frame attention and optical flow to promote temporal coherence. Later, STEVE \citep{Singh-NeurIPS-2022} added sequential latent dynamics for video generation. VideoSAUR \citep{Zadaianchuk-NeurIPS-2023} and SlotContrast \citep{Manasyan-CVPR-2025} further enhanced the approach with contrastive objectives and stronger backbones, \eg~DINOv2 \citep{oquab2023dinov2}. While effective, these methods typically rely on computationally-intensive encoders, limiting their applicability to real-time or embedded scenarios. Our work addresses this limitation by transferring object-centric knowledge from a high-capacity video model to a lightweight model via distillation.

\noindent \textbf{Knowledge distillation.} Knowledge distillation (KD) transfers information from a large ``teacher'' model to a compact ``student'' by aligning output distributions \cite{Hinton-arxiv-2015}, intermediate activations \cite{Romero-arxiv-2014}, or feature Jacobians \cite{Czarnecki-NeurIPS-2017}, among others. Most of the existing KD studies are focused on traditional tasks (\eg~classification \cite{iordache2025multi}, language modeling \cite{guminillm}), with limited application to object-centric learning. 

To the best of our knowledge, there are only a few recent papers that explore distillation from scene decomposition \cite{kakogeorgiou2024spot, kara2024diod, li2024object, liao2025forla, Seitzer-arxiv-2022}. DIOD \cite{kara2024diod} introduced a self-distillation strategy in video slot attention, where an EMA teacher supervises the student by refining slot masks using optical flow and static cues. DINOSAUR \cite{Seitzer-arxiv-2022} is a framework that distills high-level features from a pre-trained DINOv2 encoder by training a slot-based model to reconstruct them, enabling slot emergence on real-world images. Some works \cite{li2024object} used slot attention to distill object-centric features from an image-based detector to an event-based one via slot-aware cross-modal alignment, while others proposed federated slot learning \cite{liao2025forla}, using a teacher-student decoder setup to distill shared object-centric representations across distributed domains. 

Nevertheless, none of the aforementioned studies considers structured slot-based representations. Our approach departs from standard distillation by aligning latent object slots directly, using a simple and effective cosine-based objective in the slot space. To our knowledge, \textsc{SlotMatch} is the first method to perform distillation at the level of semantic slot representations in video.

\vspace{-0.1cm}
\section{Method}
\vspace{-0.1cm}

\begin{figure*}[!t]
  \centering
  \includegraphics[width=0.8\linewidth]{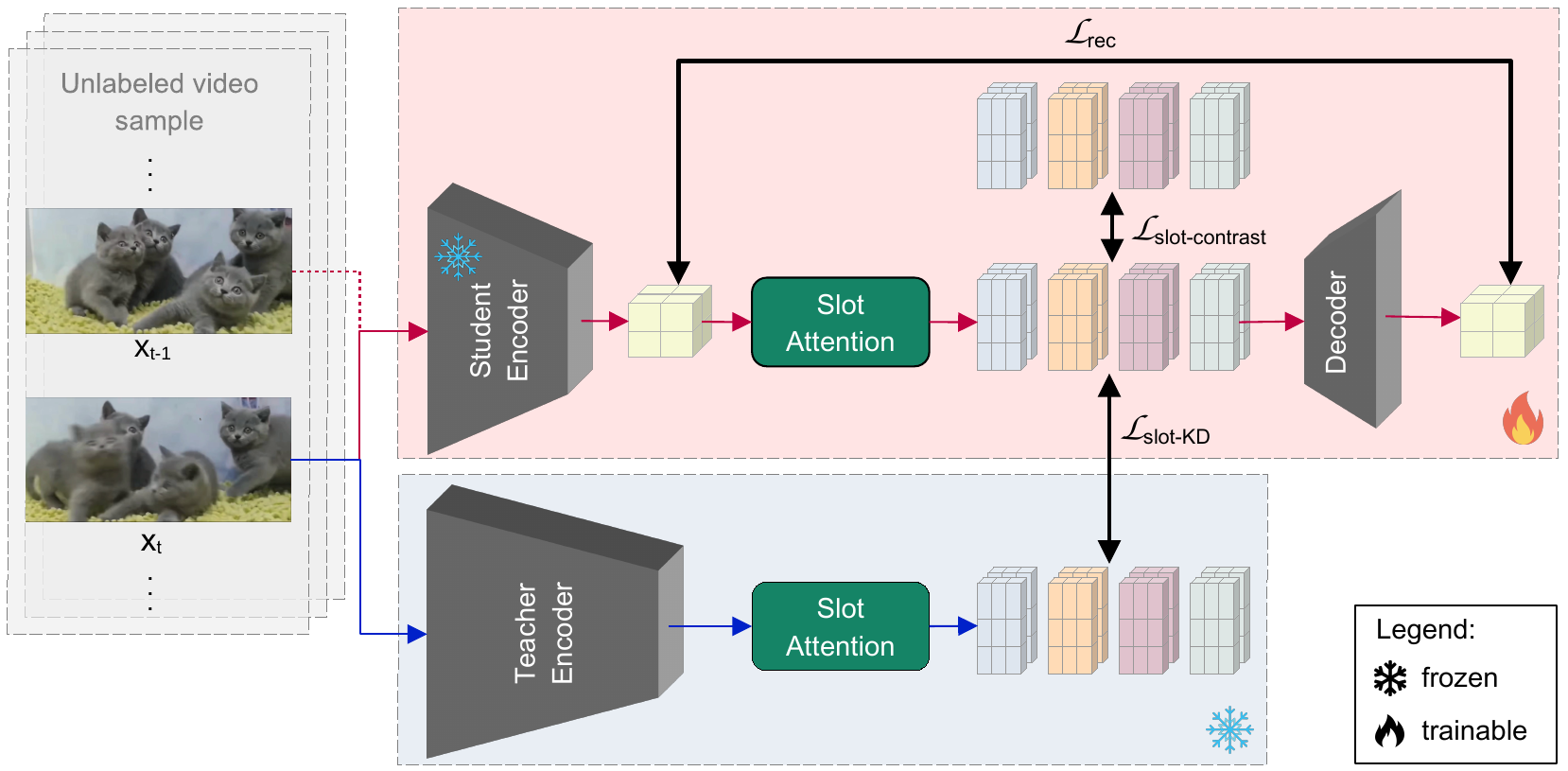}
      \vspace{-0.25cm}
  \caption{Our \textsc{SlotMatch} framework performs knowledge distillation from a large frozen teacher model to a compact trainable student model. Both models process video frames through slot attention mechanisms, with the student learning through three loss components: reconstruction ($\mathcal{L}_{\text{rec}}$), temporal consistency ($\mathcal{L}_{\text{slot-contrast}}$), and our novel slot matching loss ($\mathcal{L}_{\text{slot-KD}}$) that directly aligns corresponding slots between teacher and student models using cosine similarity. Best viewed in color.}
\label{fig_pipeline} 
      \vspace{-0.4cm}
\end{figure*}

We propose \textsc{SlotMatch}, a distillation framework designed to transfer temporally-consistent object-centric representations from a large teacher model to a compact student model. Our core insight is to operate directly in the slot space. As such, we introduce a novel similarity-based distillation objective that aligns semantic slot representations between teacher and student models. Unlike standard methods, \textsc{SlotMatch} avoids conventional pixel-wise or feature-level distillation, instead relying on a direct semantic match between corresponding slots. 

\noindent \textbf{Problem setup.}
Let $\mathbf{T}$ denote a frozen pre-trained teacher model with a high-capacity encoder, and $\mathbf{S}$ denote a lightweight trainable student. Both operate on video frames and produce a fixed number of slot representations per frame, capturing object-centric semantics. Specifically, for a video frame $\boldsymbol{x}_t$, the teacher and the student produce the slot representations defined below:
\begin{equation}
\boldsymbol{S}^{\mathbf{T}}_t\!=\!\{s^{\mathbf{T}}_{t,n}\}_{n=1}^N\!\in\! \mathbb{R}^{N \times d},\, \boldsymbol{S}^{\mathbf{S}}_t\!=\!\{s^{\mathbf{S}}_{t,n}\}_{n=1}^N\!\in\! \mathbb{R}^{N \times d},
\end{equation}
where each slot $s_{t,n} \in \mathbb{R}^{d}$ encodes information about a distinct object in the scene. We aim to train the student model such that its slots match the semantics of the teacher, preserve temporal consistency, and support scene reconstruction. 

\begin{algorithm*}[!t]
\caption{\textsc{SlotMatch}: Slot-Level Knowledge Distillation}
\label{alg:slot_distillation}

\KwIn{
$\mathcal{D} = \{ \boldsymbol{X}_1, \dots, \boldsymbol{X}_{\!M} \}$ -- training set of $M$ video sequences, each with $T$ frames; \\
$\mathbf{T}=\mathbf{T}^{\text{enc}+\text{attn}+\text{dec}}$ -- pre-trained teacher model with frozen weights $\theta_{\mathbf{T}}=\left\{\theta_{\mathbf{T}}^{\text{enc}}, \theta_{\mathbf{T}}^{\text{attn}}, \theta_{\mathbf{T}}^{\text{dec}} \right\}$; \\
$\mathbf{S}=\mathbf{S}^{\text{enc}+\text{attn}+\text{dec}}$ -- student model with trainable weights $\theta_{\mathbf{S}}=\left\{\theta_{\mathbf{S}}^{\text{enc}}, \theta_{\mathbf{S}}^{\text{attn}}, \theta_{\mathbf{S}}^{\text{dec}} \right\}$; \\
$\alpha$ -- contrastive loss weight, $\beta$ -- slot distillation weight, $\eta$ -- learning rate; $\tau$ -- temperature parameter;\\
$N$ -- number of slots per frame;
$B$ -- mini-batch size.
}

\KwOut{$\theta_{\textbf{S}}$ -- learned weights of the student model.}

\Repeat{convergence}{
    \ForEach{mini-batch $\mathcal{B} = \left\{ \boldsymbol{V}^{(b)} \right\}_{b=1}^B \subset \mathcal{D}$}{
        \ForEach{video index $b \in \{1,...,B\}$}{
            \ForEach{frame index $t \in \{1,...,T\}$}{

                $\boldsymbol{S}^{\mathbf{T}}_{b,t} \gets \mathbf{T}^{\text{enc}+\text{attn}}\left(\boldsymbol{x}^{(b)}_t; \theta_{\mathbf{T}}^{\text{enc}}, \theta_{\mathbf{T}}^{\text{attn}}\right)$ \textcolor{commentGreen}{$\lhd$ obtain teacher slots of shape $N \times d$}\\

                ${\boldsymbol{F}}{}^{\mathbf{S}}_{b,t} \gets \mathbf{S}^{\text{enc}}\left(\boldsymbol{x}^{(b)}_t; \theta_{\mathbf{S}}^{\text{enc}}\right)$ \textcolor{commentGreen}{$\lhd$ obtain student features}\\
                                
                $\boldsymbol{S}^{\mathbf{S}}_{b,t} \gets \mathbf{S}^{\text{attn}}\left({\boldsymbol{F}}{}^{\mathbf{S}}_{b,t}; \theta_{\mathbf{S}}^{\text{attn}}\right)$ \textcolor{commentGreen}{$\lhd$ obtain student slots of shape $N \times d$}\\
                                
                $\hat{\boldsymbol{F}}{}^{\mathbf{S}}_{b,t} \gets \mathbf{S}^{\text{dec}}\left(\boldsymbol{S}^{\mathbf{S}}_{b,t}; \theta_{\mathbf{S}}^{\text{dec}}\right)$ \textcolor{commentGreen}{$\lhd$ obtain decoded reconstruction}\\
            }
        }

        $\mathcal{L}_{\text{rec}} \gets \frac{1}{B\cdot T} \sum_{b=1}^{B} \sum_{t=1}^{T} \left\|\hat{\boldsymbol{F}}{}^{\mathbf{S}}_{b,t} - \boldsymbol{F}{}^{\mathbf{S}}_{b,t}\right\|^2$ \textcolor{commentGreen}{$\lhd$ feature reconstruction loss (based on MSE)}\\

        $\mathcal{L}_{\text{slot-contrast}} \gets \frac{1}{B\cdot T\cdot N} \sum_{b=1}^{B} \sum_{t=1}^{T-1} \sum_{n=1}^{N} 
        -\log \frac{
        \exp \left( \text{sim}(s^{\mathbf{S}}_{b,t,n},\, s^{\mathbf{S}}_{b,t+1,n})/\tau \right)}{
        \sum_{b',t',n'} \exp \left( \text{sim}(s^{\mathbf{S}}_{b,t,n},\, s^{\mathbf{S}}_{b',t',n'})/\tau \right)}$ \textcolor{commentGreen}{$\lhd$ slot-slot contrastive loss}\\

        $\mathcal{L}_{\text{slot-KD}} \gets \frac{1}{B\cdot T\cdot N} \sum_{b=1}^{B} \sum_{t=1}^{T} \sum_{n=1}^{N} \left(1 - 
        \frac{
        \langle s^{\mathbf{S}}_{b,t,n},\, s^{\mathbf{T}}_{b,t,n} \rangle}{
        \|s^{\mathbf{S}}_{b,t,n}\| \cdot \|s^{\mathbf{T}}_{b,t,n}\|} \right)$ \textcolor{commentGreen}{$\lhd$ distillation loss (based on cosine similarity)}\\

        $\mathcal{L}_{\text{total}} \gets \mathcal{L}_{\text{rec}} + \alpha \cdot \mathcal{L}_{\text{slot-contrast}} + \beta \cdot \mathcal{L}_{\text{slot-KD}}$ \textcolor{commentGreen}{$\lhd$ compute combined training objective}\\

        $\theta_{\textbf{S}} \gets \theta_{\textbf{S}} - \eta \cdot \nabla_{\!\theta_{\textbf{S}}}\, \mathcal{L}_{\text{total}}$ \textcolor{commentGreen}{$\lhd$ update the weights of the student model}\\
    }
}
\end{algorithm*}

\noindent \textbf{\textsc{SlotMatch} framework.}
In Figure~\ref{fig_pipeline}, we showcase our \textsc{SlotMatch} framework, which operates in a dual-model fashion, with the teacher and student models processing identical video inputs in parallel, through a slot-attention-based encoder-decoder architecture inspired by SlotContrast \cite{Manasyan-CVPR-2025}. 
First, a pre-trained DINOv2 \cite{oquab2023dinov2} backbone extracts spatial feature maps from each video frame. The teacher employs a higher-capacity encoder (\eg~ViT-B), while the student uses a lighter variant (\eg~ViT-S). Both encoders are frozen, but they contain a shallow trainable MLP that projects encoder features into a joint $d$-dimensional slot space. For each frame, we initialize a fixed number of $N$ slots and iteratively update them via attention applied to encoded features. Slot attention compresses scene information into object-centric latent vectors. In the final part, the student employs a decoder that reconstructs the original features from slots. This ensures that slots are informative and can guide unsupervised segmentation. The teacher also uses a similar decoder during its training phase, but we do not employ it in the distillation stage. To avoid clutter, we refrain from depicting the decoder of the teacher in Figure~\ref{fig_pipeline}.

Training proceeds as formally described in Algorithm~\ref{alg:slot_distillation}. For each mini-batch of $B$ videos with $T$ frames each, we extract per-frame slot representations $\boldsymbol{S}^{\mathbf{T}}_{b,t}$ and $\boldsymbol{S}^{\mathbf{S}}_{b,t}$ from the teacher and student models (steps 5 and 7), where $b$ is the video index and $t$ is the frame index. In step 8, a decoder reconstructs the input features from the student slots, producing $\hat{\boldsymbol{F}}{}^{\mathbf{S}}_{b,t}$. We next compute three loss terms: (i) a reconstruction loss $\mathcal{L}_{\text{rec}}$ to ensure that student slots retain sufficient scene-level information (step 9); (ii) a temporal contrastive loss $\mathcal{L}_{\text{slot-contrast}}$ to promote both consistency and diversity of slots across adjacent frames (step 10); and (iii) our core distillation objective $\mathcal{L}_{\text{slot-KD}}$ to align each student slot with its teacher counterpart via cosine similarity (step 11). These are combined into a weighted loss (step 12) used to update the student parameters via gradient descent (step 13), while the teacher remains frozen during the whole process.

\noindent \textbf{Slot-level distillation.}
In the teacher-student training phase, the first issue that arises is to find the correspondence between teacher and student slots, since the slots do not follow a precise order. We consider two options to determine the correspondence among teacher and student slots. One option is to perform explicit matching, \eg~via Hungarian assignment, while the other is to implicitly assume slot index correspondence, \ie~the $n$-th slot of the student corresponds to the $n$-th slot of the teacher. We ablate slot correspondence strategies in Table \ref{table:hungarian_matching} and find negligible difference between Hungarian and direct assignment setups. We thus adopt the simple direct assignment in our framework.

Each model produces $N$ slot representations of dimension $d$, each of them encoding semantic information about a distinct object in the scene. The teacher model, through its larger encoder, learns robust object representations via its slots, capturing fine-grained semantic details, which we aim to distill into the student through direct slot alignment.

The core contribution of our method is the introduction of a novel cosine-based slot distillation loss $\mathcal{L}_{\text{slot-KD}}$ that aligns each student slot with its teacher counterpart:
\begin{equation}\label{eq_slot_KD}
\!\!\mathcal{L}_{\text{slot-KD}}\!=\!\frac{1}{B\!\cdot\!T\!\cdot\!N} \sum_{b=1}^{B}\!\sum_{t=1}^{T}\!\sum_{n=1}^{N}\!\left(\!1\!-\! \frac{\langle s^{\mathbf{S}}_{b,t,n}, s^{\mathbf{T}}_{b,t,n}\rangle}{\|s^{\mathbf{S}}_{b,t,n}\|\!\cdot\! \|s^{\mathbf{T}}_{b,t,n}\|}\!\right)\!,
\end{equation}
where $B$ is the mini-batch size, $T$ is the number of frames in a video, and $N$ is the number of slots. This formulation directly distills the object-centric latent space, promoting structured and efficient knowledge transfer. Each slot from the student serves as a ``positive'' that must align to its corresponding teacher slot through cosine similarity-based optimization. The target teacher slot serves a fixed ``anchor'', since the teacher is completely frozen during distillation. Our formulation creates attractive forces that guide the student's representation toward the teacher's semantic space.

\noindent \textbf{Temporal and reconstruction losses.}
Following \citet{Manasyan-CVPR-2025}, we integrate two auxiliary objectives to improve slot quality for the student, namely the slot-slot contrastive loss ($\mathcal{L}_{\text{slot-contrast}}$), and the feature reconstruction loss ($\mathcal{L}_{\text{rec}}$). This decomposition assigns distinct responsibilities to each loss component. Critically, the absence of explicit negative sampling in the distillation loss is compensated by the concurrent optimization of $\mathcal{L}_{\text{slot-contrast}}$, which provides the necessary repulsive forces to maintain slot distinctiveness for the student. While the slot distillation loss enforces semantic alignment between corresponding teacher-student slot pairs, the slot-slot contrastive loss maintains representational diversity among student slots, ensures temporal consistency by attracting slots representing the same object across consecutive frames, while also repelling slots from different objects within the mini-batch. The feature reconstruction loss $\mathcal{L}_{\text{rec}}$ ensures that student slots contain sufficient information to reconstruct features given by the student encoder, via its trainable decoder.

\noindent \textbf{Overall training objective.}
The complete training objective integrates all three components: 
\begin{equation}
\mathcal{L}_{\text{total}} = \mathcal{L}_{\text{rec}} + \alpha \cdot \mathcal{L}_{\text{slot-contrast}} + \beta \cdot \mathcal{L}_{\text{slot-KD}},
\end{equation}
where $\alpha$ and $\beta$ control the relative importance of temporal consistency and knowledge transfer, respectively. 
The key advantages of our slot-based approach are its simplicity and efficiency. Unlike methods requiring complex assignment algorithms or feature-level matching, our direct correspondence assumption eliminates computational overhead, while maintaining effective knowledge transfer. 

\noindent \textbf{Theoretical justification.}
We conjecture that it is sufficient to employ the loss in Eq.~\eqref{eq_slot_KD} to perform effective knowledge distillation. In other words, we assume that integrating additional losses, \eg~distilling the reconstructed features, is not necessary as long as $\mathcal{L}_{\text{slot-KD}}$ is minimized to zero after training. To support our conjecture, which simplifies the distillation framework, we introduce the following theorem:
\begin{theorem}\label{prop_slot_distill}
Let $s^\mathbf{T}\!\in \mathbb{R}^d$ be a teacher slot and $s^\mathbf{S}\!\in \mathbb{R}^d$ a student slot, with $\left\|s^\mathbf{T}\right\| = \left\|s^\mathbf{S}\right\| = r$, where $r>0$. Let $f\!: \mathbb{R}^d \to \mathbb{R}^m$ be a $K_f$-Lipschitz neural network that decodes the slots into features. If the slot distillation loss converges to a constant $c$, \ie:
\begin{equation}\label{eq_slot_distill}
\mathcal{L}_{\text{slot-KD}}\left(s^\mathbf{T},s^\mathbf{S}\right)=1 - \frac{\langle s^\mathbf{T}, s^\mathbf{S} \rangle}{\|s^\mathbf{T}\|\cdot\|s^\mathbf{S}\|} = c,
\end{equation}
then:
\begin{equation}\label{eq_feat_distill}
\mathcal{L}_{\text{rec-KD}}\!\left(f\!\left(s^\mathbf{T}\right)\!,f\!\left(s^\mathbf{S}\right)\!\right)\!=\! \left\|f\!\left(s^\mathbf{T}\right)\!-\!f\!\left(s^\mathbf{S}\right)\!\right\|^2\!\leq K\!\cdot\!c.
\end{equation}
\end{theorem}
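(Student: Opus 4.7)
The plan is to translate the cosine-similarity condition into a bound on the Euclidean distance between the two slots, and then propagate that bound through the Lipschitz decoder $f$. The whole argument is essentially elementary since the slots live on a sphere of radius $r$.

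First, I would use the equal-norm assumption $\|s^\mathbf{T}\| = \|s^\mathbf{S}\| = r$ to rewrite the hypothesis \eqref{eq_slot_distill} as $\langle s^\mathbf{T}, s^\mathbf{S}\rangle = r^2(1-c)$. Then I would expand the squared distance in the slot space using the polarization identity
\begin{equation*}
\|s^\mathbf{T} - s^\mathbf{S}\|^2 = \|s^\mathbf{T}\|^2 + \|s^\mathbf{S}\|^2 - 2\langle s^\mathbf{T}, s^\mathbf{S}\rangle,
\end{equation*}
substitute the inner-product value obtained above, and simplify to get the tight identity $\|s^\mathbf{T} - s^\mathbf{S}\|^2 = 2r^2 c$. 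This step is the geometric heart of the argument: it is what makes the theorem work without any extra distillation term on features.

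Next, I would invoke the $K_f$-Lipschitz property of the decoder, which gives $\|f(s^\mathbf{T}) - f(s^\mathbf{S})\| \le K_f \|s^\mathbf{T} - s^\mathbf{S}\|$. Squaring both sides and substituting the identity from the previous step yields
\begin{equation*}
\|f(s^\mathbf{T}) - f(s^\mathbf{S})\|^2 \le K_f^2 \cdot 2r^2 c = (2K_f^2 r^2)\cdot c,
\end{equation*}
so the conclusion \eqref{eq_feat_distill} holds with the explicit constant $K = 2K_f^2 r^2$, which depends only on the (fixed) slot norm $r$ and the Lipschitz constant of the decoder.

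I do not anticipate any real obstacle here: the proof reduces to the spherical identity linking cosine distance and Euclidean distance, followed by one application of the Lipschitz bound. The only subtlety worth flagging in the write-up is that the equal-norm hypothesis is what turns the cosine loss into an exact distance (up to the factor $2r^2$); without it, one would only obtain a weaker bound passing through $\|s^\mathbf{T}\|\,\|s^\mathbf{S}\|$ instead of $r^2$. I would also note that the constant $K$ is benign in practice because slot representations are typically normalized or tightly concentrated in norm, so the bound directly justifies omitting the feature-level reconstruction distillation loss whenever $\mathcal{L}_{\text{slot-KD}}$ is driven close to zero.
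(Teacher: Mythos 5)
Your proposal is correct and follows essentially the same route as the paper's own proof: converting the cosine condition into the exact identity $\|s^\mathbf{T}-s^\mathbf{S}\|^2 = 2r^2c$ via the equal-norm assumption, then applying the squared Lipschitz bound to obtain $K = 2K_f^2 r^2$, which matches the constant derived in the supplementary. No gaps; the argument is complete as planned.
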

\begin{proof}
The proof is given in the supplementary.
\end{proof}

The previous theorem indicates that if $c\!\rightarrow\!0$, then the teacher-student reconstruction loss $\mathcal{L}_{\text{rec-KD}}$ is also converging to zero, which confirms our conjecture. In practice, the constant $c$ might not approach zero, \ie it might be hard to optimize $\mathcal{L}_{\text{slot-KD}}$ towards zero. In this case, introducing the teacher-student reconstruction loss into the optimization objective might be useful. We empirically test this objective and find that it does not help convergence to a better optimum. In summary, both theoretical and empirical evidence support our conjecture, suggesting that our simple knowledge distillation objective is sufficient.

\begin{table*}[t]
\centering
\caption{Comparison of video object segmentation performance on MOVi-E, YTVIS-2021, and DAVIS 2017. We report FG-ARI and mBO scores at both image and video levels. In most cases, \textsc{SlotMatch} outperforms both baselines and ablated versions. The top two scores for each metric are highlighted in \best{\textbf{blue bold}} (top method), \secondbest{\textbf{orange bold}} (second best).}
\label{table:main_results}
\vspace{-0.25cm}
\small{
\setlength{\tabcolsep}{4.5pt}
\begin{tabular}{|l|cc|cc|cc|cc|cc|cc|}
\hline
\multirow{6}{*}{Method}  & \multicolumn{4}{c|}{MOVi-E} & \multicolumn{4}{c|}{YTVIS-2021} & \multicolumn{4}{c|}{DAVIS 2017} \\ 
\cline{2-13}
& \multicolumn{2}{c|}{Image} & \multicolumn{2}{c|}{Video} & \multicolumn{2}{c|}{Image} & \multicolumn{2}{c|}{Video}  & \multicolumn{2}{c|}{Image} & \multicolumn{2}{c|}{Video} \\ 
\cline{2-13}
& \rotatebox{90}{FG-ARI $\uparrow\,$} & \rotatebox{90}{mBO $\uparrow$} & \rotatebox{90}{FG-ARI $\uparrow$} & \rotatebox{90}{mBO $\uparrow$} & \rotatebox{90}{FG-ARI $\uparrow$} & \rotatebox{90}{mBO $\uparrow$} & \rotatebox{90}{FG-ARI $\uparrow$} & \rotatebox{90}{mBO $\uparrow$} & \rotatebox{90}{FG-ARI $\uparrow$} & \rotatebox{90}{mBO $\uparrow$} & \rotatebox{90}{FG-ARI $\uparrow$} & \rotatebox{90}{mBO $\uparrow$}  \\ %
\hline
\hline
SAVi \cite{kipf-ICLR-2022} & 39.2 & - & 42.8 & 16.0 & - & - & - & - & - & - & - & - \\ %
STEVE \cite{Singh-NeurIPS-2022} & 54.1 & -  & 50.6 & 26.6 & - & -  & 15.0 & 19.1 & - & - & - & - \\ %
VideoSAUR \cite{Zadaianchuk-NeurIPS-2023} & 78.4 & - & 73.9 & \best{\textbf{35.6}} & - & - & 28.9 & 26.3 & 12.6 & \secondbest{\textbf{23.9}} & 8.3 & 18.4  \\ %
VideoSAURv2 \cite{Manasyan-CVPR-2025} & - & - & 77.1 & \secondbest{\textbf{34.4}} & - & - & 31.2 & 29.7  & 11.8 & 23.0 & 7.6 & 19.2 \\ %
SOLV \cite{Aydemir-NeurIPS-2023} & 80.8 & - & - & - & 29.1 & - & - & -  & 32.2 & - & - & - \\ %
\textsc{SlotContrast} \cite{Manasyan-CVPR-2025} (teacher) & \secondbest{\textbf{83.9}} & \secondbest{\textbf{32.4}} & \secondbest{\textbf{81.7}} & 28.6 & 45.5 & \secondbest{\textbf{39.7}} & 36.8 & 32.4 &  89.1 & 12.1 & 73.3 & 11.8  \\
\hline
DINOSAUR \cite{Seitzer-arxiv-2022} & 65.1 & 29.1 & - & - & - & - & - & - & - & - & - & - \\
DIOD \cite{kara2024diod} & 82.2 & - & - & - & - & - & - & - & - & - & - & - \\ %
Student (no KD) & 80.1 & 28.9 & 76.6 & 30.1 & 44.5 & 38.8 & \secondbest{\textbf{37.2}} & 32.1 & 93.7 & 11.9 & 76.6 & 11.6 \\ 
Feature KD & 81.8 & 31.6 & 80.1 & 28.1 & 45.2 & 38.2 & 35.9 & 31.4 & 94.6 & 12.8 & \secondbest{\textbf{91.6}} & 12.7 \\ %
Reconstruction KD  & 81.4 & 31.5 & 81.1 & 29.1 & 44.9 & 38.8 & 35.6 & 31.8  & 92.0 & 20.5 & 88.4 & \secondbest{\textbf{20.3}} \\ %
\hline
\textsc{SlotMatch} (MSE) & 82.1 & 30.2 & 78.6 & 26.4 & 45.1 & \secondbest{\textbf{39.7}} & 36.9 & \secondbest{\textbf{32.6}} & 94.4 & 20.0 & 89.6 & 20.0 \\ %
\textsc{SlotMatch} + Reconstruction KD & 74.4 & 29.7 & 66.2 & 24.1 & \secondbest{\textbf{45.6}} & 38.3 & 36.1 & 31.1  & \best{\textbf{96.4}} & 12.7 & 91.2 & 12.4 \\ %
\hline
\textsc{SlotMatch} (ours) & \best{\textbf{84.1}} & \best{\textbf{33.6}} & \best{\textbf{81.8}} & 30.5 & \best{\textbf{45.8}} & \best{\textbf{39.8}} & \best{\textbf{37.3}} & \best{\textbf{32.8}} & \secondbest{\textbf{95.6}} & \best{\textbf{30.6}} & \best{\textbf{92.8}} & \best{\textbf{31.6}} \\
\hline
\end{tabular}
}
\vspace{-0.2cm}
\end{table*}

\vspace{-0.1cm}
\section{Experiments}
\vspace{-0.1cm}
\subsection{Datasets}
\vspace{-0.1cm}

We evaluate \textsc{SlotMatch} on both synthetic and real-world video datasets to assess its effectiveness across controlled, real-world and zero-shot scenarios.

\noindent \textbf{MOVi-E.} The Multi-Object Video (MOVi-E) dataset \cite{ghorbani2021movi} is generated using the Kubric simulator, which provides ground-truth object segmentation for precise evaluation. MOVi-E includes up to 23 objects per scene and is filmed with linear camera motion. The dataset includes complex object interactions, occlusions, and realistic textures. The videos have a 24-frame length and a 256$\times$256 resolution. We use the official train and validation splits.  

\noindent \textbf{YTVIS-2021.}
To evaluate scalability to real-world scenarios, we use the YouTube Video Instance Segmentation 2021 (YTVIS-2021) dataset \cite{vis2021}. It contains unconstrained real-world videos, capturing diverse scenes including indoor/outdoor environments, multiple object categories, and complex interactions. Videos vary in length (up to 76 frames) and resolution. We resize them to 518$\times$518 pixels. We report results on the official split.

\noindent \textbf{DAVIS 2017.}
We report results on the DAVIS 2017 dataset \cite{Perazzi-CVPR-2016} to evaluate fine-grained segmentation quality. This benchmark provides videos with dense temporal annotations, with every frame labeled using pixel-accurate object masks. The videos span diverse real-world scenarios, including people, animals, vehicles, and sports activities, capturing challenging phenomena like fast motion and cluttered backgrounds. 

\noindent \textbf{YTVIS-2021$\rightarrow$OVIS.}
We conduct zero-shot experiments on Occluded Video Instance Segmentation (OVIS) \cite{qi2022occluded}, a dataset specifically focused on challenging scenarios. OVIS contains real-world videos with heavy occlusions, making it particularly suitable for testing temporal consistency when objects frequently disappear and reappear. The dataset features 607 training videos and 140 validation videos across 25 object categories.

\vspace{-0.1cm}
\subsection{Evaluation Metrics}
\vspace{-0.1cm}

We use two complementary evaluation metrics, namely the Foreground Adjusted Rand Index (FG-ARI) and the mean Best Overlap (mBO), and apply them at both image and video levels. \textbf{FG-ARI} measures how well the model groups pixels belonging to the same object. With a range of 0 to 1, and a higher value indicating better performance, its focus is on \emph{object discovery quality}, \ie~if the model correctly identifies which pixels belong to the same object. FG-ARI compares predicted object masks against ground-truth segmentation masks. \textbf{mBO} measures the spatial precision of object masks, \ie~how accurately the predicted masks align with ground-truth boundaries. For each predicted mask, it finds the ground-truth mask with the highest IoU, then averages the IoU across all objects. mBO focuses on \emph{segmentation mask quality}, measuring how precise and well-defined the predicted object boundaries are. 

For the image-level evaluation, we compute the FG-ARI and mBO metrics per frame, then average them. For video, we compute the metrics across entire video sequences, accounting for temporal consistency.

\vspace{-0.1cm}
\subsection{Baselines and Hyperparameters}
\vspace{-0.1cm}

\noindent \textbf{Baselines.}
We compare \textsc{SlotMatch} against a range of recent object-centric video segmentation models. These include slot-based methods such as SAVi~\cite{kipf-ICLR-2022}, STEVE~\cite{Singh-NeurIPS-2022}, SOLV~\cite{Aydemir-NeurIPS-2023}, VideoSAUR~\cite{Zadaianchuk-NeurIPS-2023}, and its DINOv2-enhanced variant, VideoSAURv2~\cite{Manasyan-CVPR-2025}. We reproduce the results of \textsc{SlotContrast}~\cite{Manasyan-CVPR-2025}, which serves as the teacher in our framework based on knowledge distillation. We further introduce several new baselines, each employing a different distillation strategy, as follows:
\begin{itemize}
    \item \textbf{DIOD} \cite{kara2024diod} employs self-distillation to continuously improve motion-guided object discovery via an iterative pseudo-labeling framework.
    \item \textbf{DINOSAUR} \cite{Seitzer-arxiv-2022} trains a student with a particular form of bottleneck that condenses the high-dimensional information from teacher features.
    \item \textbf{Student (no KD)} is a \textsc{SlotContrast} model trained from scratch, using a smaller encoder version from the same family, namely DINOv2-small instead of DINOv2-base.
    \item \textbf{Feature KD} is a student model (based on DINOv2-small) which distills features from the frozen DINOv2-base teacher encoder. These are passed through a two-layer MLP, and the student is trained to match the transformed features via MSE.
    \item \textbf{Reconstruction KD} distills the reconstructed output of the teacher model to the student model, minimizing the MSE between the two outputs.
\end{itemize}

We also report results with various ablated versions or alternatives of \textsc{SlotMatch}, namely:
\begin{itemize}
    \item \textbf{\textsc{SlotMatch} (MSE)} aligns teacher and student slot representations using MSE instead of cosine similarity.
    \item \textbf{\textsc{SlotMatch} + Reconstruction KD} combines slot-level distillation and reconstruction output distillation.
\end{itemize}

\noindent \textbf{Hyperparameters.} We discuss reproducibility details and hyperparameter settings in the supplementary material.

\vspace{-0.1cm}
\subsection{Results}
\vspace{-0.1cm}

\noindent \textbf{Quantitative results.}
In Table~\ref{table:main_results}, we present comparative results on MOVi-E, YTVIS-2021 and DAVIS. Remarkably, \textsc{SlotMatch} consistently outperforms all prior methods across these datasets, including its teacher, \textsc{SlotContrast}~\cite{Manasyan-CVPR-2025}. Despite having $3.6\times$ fewer parameters and up to $2.7\times$ faster inference speed than \textsc{SlotContrast} (as per Table~\ref{table:ablation_timings}), our student achieves higher mask precision (mBO) and comparable or better object separation (FG-ARI). Notably, we improve video mBO on YTVIS from 32.4 (for \textsc{SlotContrast}) to 32.8 (for \textsc{SlotMatch}), while reducing latency by nearly half, showing that accurate object-centric segmentation does not require large models.

We find that mBO improvements are very large on the real-world DAVIS dataset, where the teacher may overfit to slot assignment, but underperform on precise masks. Interestingly, our student avoids this overfitting and better balances separation and alignment. This aligns with our hypothesis that slot supervision can offer better generalization than end-to-end training with larger capacity alone. 

\begin{figure*}[!t]
  \centering
  \includegraphics[width=0.78\linewidth]{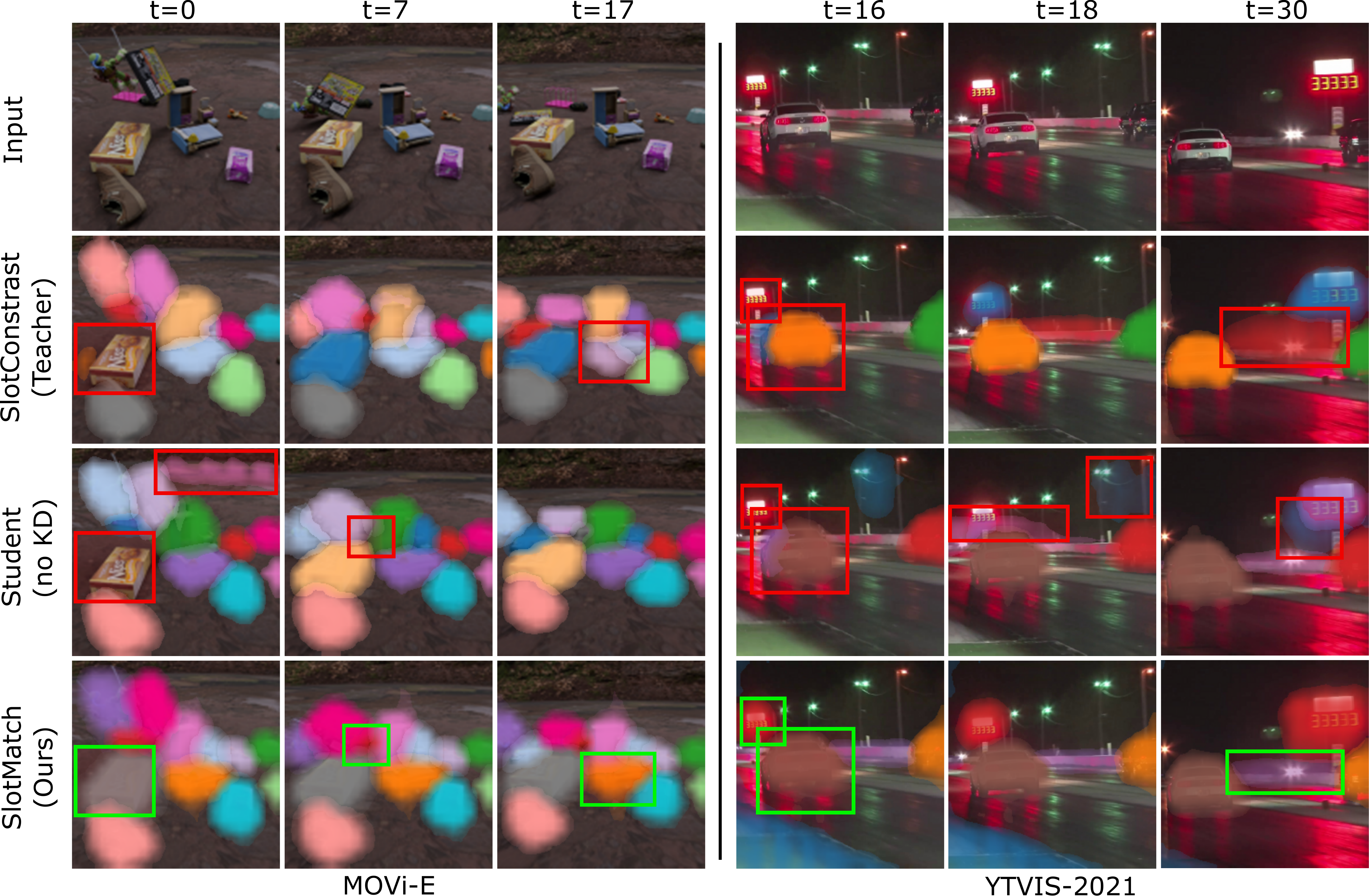}
      \vspace{-0.25cm}
    \caption{Qualitative segmentation results on MOVi-E (left) and YTVIS-2021 (right). The first row shows raw frames; the second and third rows show slots from the student and teacher models, respectively; the final row presents results from our distillation-based \textsc{SlotMatch}. \textsc{SlotMatch} recovers missed slots, refines object boundaries, and produces sharper, more consistent slots. Mistakes by the student and teacher models are annotated in {\color{red}red}, while corrections and additional detections introduced by \textsc{SlotMatch} are highlighted in {\color{Green}green}. Best viewed in color.} 
  \label{fig:visual_results}
      \vspace{-0.4cm}
\end{figure*}

\noindent \textbf{Qualitative results.}
In Figure \ref{fig:visual_results}, we present a qualitative comparison of segmentation results produced by the teacher, the student model trained without distillation, and our \textsc{SlotMatch} model, across MOVi-E \cite{ghorbani2021movi} (left-hand side) and YTVIS-2021 \cite{vis2021} (right-hand side). 

On MOVi-E, which consists of synthetic scenes with numerous objects, both teacher and student models fail to detect certain objects in the initial video frames, \eg~the box on the left at $t=0$. 
The teacher model suffers from over-clustering (\eg~object on the right at $t=17$), fragmenting single objects into multiple slots, while the student model (without distillation) fails to assign smaller or finer objects to any slot, \eg~the one in the center at $t=7$. In contrast, our student trained with \textsc{SlotMatch} consistently recovers missed instances, resolves the over-clustering issue, produces robust slot representations, and maintains temporally consistent slot assignments, \eg~consistent coloring of objects from $t=7$ to $t=17$. 

On YTVIS, a real-world and more challenging dataset with occlusions and poor lighting, we observe similar trends, \ie~both teacher and student models exhibit over-clustering in the initial frames, fragmenting single objects into multiple slots. In contrast, \textsc{SlotMatch} effectively refines the spatial extent of the masks and mitigates over-segmentation. Furthermore, our method outperforms in both slot assignment (\eg~correctly grouping the street light on the left) and boundary delineation (\eg~guardrail on the right).

Green boxes indicate new or corrected slots introduced by our method, while red boxes highlight limitations of both the teacher and non-distilled student. These visual improvements confirm that distillation from the slots of a strong teacher not only preserves the original slot grouping, but also enhances the student's robustness to occlusion, lighting variation, and clutter, all in an unsupervised fashion.


\begin{table}[t]
\centering
\caption{Comparison of inference speed, number of parameters, and GFLOPs between the \textsc{SlotContrast} teacher and our \textsc{SlotMatch} student. Measurements are performed on three GPUs (NVIDIA A100 with 40GB VRAM, NVIDIA 3090 with 24GB VRAM, NVIDIA 2080 with 12GB VRAM), using video samples from the YTVIS-2021 dataset.}
\label{table:ablation_timings}
\vspace{-0.25cm}
\setlength{\tabcolsep}{1.8pt}
\small{
\begin{tabular}{|l|c|c|c|c|c|}
\hline
\multirow{2}{*}{Method} & \#Params & \multirow{2}{*}{\#GFLOPs $\downarrow$} & \multicolumn{3}{c|}{Inference Time (ms) $\downarrow$} \\
\cline{4-6}
& (M) $\downarrow$ & & $\;\;$A100$\;\;$ &\;\;$3090$\;\;& \;\;$2080$\;\;\\
\hline\hline
Teacher & 91.0 & 8825.7 & 347.4 & 700.5 & 916.5 \\
\textsc{SlotMatch} & 25.2 & 3266.5 & 186.0 & 261.4 & 331.0 \\
\hline
Reduction & \textbf{3.61$\times$} & \textbf{2.70$\times$} & \textbf{1.87$\times$} & \textbf{2.68$\times$} & \textbf{2.77$\times$} \\
\hline
\end{tabular}
}
\vspace{-0.35cm}
\end{table}


\noindent \textbf{Efficiency comparison.}
In addition to improving segmentation quality, \textsc{SlotMatch} significantly reduces computational cost compared to its teacher, \textsc{SlotContrast}. As shown in Table~\ref{table:ablation_timings}, our distilled student achieves a reduction of $3.6\times$ in parameters (91M$\rightarrow$25M) and $2.7\times$ in FLOPs, respectively. We run our experiments on three GPUs, to assess running time differences in distinct compute environments. We find that our student is between $1.9\times$ and $2.7\times$ faster than its student. The reported gains come with no additional supervision and without sacrificing slot quality or temporal consistency. This highlights the practical benefits of our distillation approach in resource-constrained settings. 

\begin{table}[t]
\centering
\caption{Zero-shot generalization results on the OVIS dataset, using models trained only on YTVIS (see performance on original dataset in Table \ref{table:main_results}). We compare the student trained from scratch (no KD) versus our \textsc{SlotMatch} student.} 
\label{table:ytvis_to_ovis}
\vspace{-0.25cm}
\small{
\setlength{\tabcolsep}{2.9pt} 
\begin{tabular}{|l|cc|cc|}
\hline
\multirow{2}{*}{Method} & \multicolumn{2}{c|}{Image} & \multicolumn{2}{c|}{Video} \\
\cline{2-5}
 & FG-ARI $\uparrow$ & mBO $\uparrow$ & FG-ARI $\uparrow$  & mBO $\uparrow$ \\
\hline
\hline
Student (no KD) & 54.6 & 24.9 & 34.6 & 21.5     \\
\textsc{SlotMatch}  & 55.8 & 25.5 & 34.8 & 21.5  \\
\hline
\end{tabular}
}
\vspace{-0.1cm}
\end{table}

\noindent \textbf{Zero-shot generalization.}
In Table \ref{table:ytvis_to_ovis}, we compare \textsc{SlotMatch} against the student trained from scratch (without distillation), in the zero-shot setup on the challenging OVIS dataset. Notably, \textsc{slot-match} maintains competitive video-level performance (FG-ARI: 34.8 vs.~34.6) compared with the student trained from scratch, while preserving slightly better spatial mask quality (mBO: 25.5 vs.~24.9). This demonstrates that distilling structured slot representations enhances robustness to occlusion and domain shift. 

\vspace{-0.1cm}
\subsection{Ablation Studies}
\label{sec:ablations}
\vspace{-0.1cm}

We conduct ablations to isolate the contribution of each component in our framework. The ablation results are summarized in Tables~\ref{table:main_results}, \ref{table:weight_distil}, and \ref{table:hungarian_matching}.

\noindent \textbf{Slot vs.~feature and reconstruction KD.}
To test whether slot representations are the best target for distillation, we compare our method with variants that distill encoder or decoder features via MSE (see Table~\ref{table:main_results}). While feature KD and reconstruction KD bring modest gains over the student (no KD) baseline, they remain below our slot-based formulation, by significant margins. This supports our hypothesis that distilling structured object-centric representations leads to better temporal consistency and segmentation quality.

\noindent \textbf{Cosine vs.~MSE loss.}
We compare the proposed loss based on cosine similarity with an alternative loss based on MSE over slot representations (see Table~\ref{table:main_results}). Cosine-based distillation yields more stable improvements across datasets, especially on YTVIS, likely due to its scale invariance and better alignment with semantic structure in high-dimensional spaces.

\begin{table}[t]
\centering
\caption{Ablation study on the YTVIS dataset showing the effect of varying the weight $\beta$ of the distillation loss $\mathcal{L}_{\text{slot-KD}}$ in \textsc{SlotMatch}. The best performance across all metrics is achieved at $\beta = 0.2$.}
\label{table:weight_distil}
\vspace{-0.25cm}
\small{
\setlength{\tabcolsep}{2.2pt} 
\begin{tabular}{|l|c|cc|cc|}
\hline
\multirow{2}{*}{Method} & \multirow{2}{*}{$\beta$} & \multicolumn{2}{c|}{Image} & \multicolumn{2}{c|}{Video} \\
\cline{3-6}
 &  & FG-ARI $\uparrow$ & mBO $\uparrow$ & FG-ARI $\uparrow$  & mBO $\uparrow$ \\
\hline
\hline
Teacher & - & 45.5 & 39.7 & 36.8 & 32.4  \\
Student (no KD) & - & 44.5 & 38.8 & 37.2 & 32.1  \\
\hline
\multirow{5}{*}{\textsc{SlotMatch}} & 0.1 & 44.5 & 39.1 & 34.9 & 32.5 \\
 & 0.2 &  $\!\!$\textbf{45.8} &  $\!\!$\textbf{39.8} &   $\!\!$\textbf{37.3} &  $\!\!$\textbf{32.8} \\
 & 0.3 & 44.4 & 39.1 & \textbf{36.3} & 32.5 \\
 & 0.5 & 43.8 & 38.9 & 35.5 & 32.1 \\
 & 0.8 & 43.9 & 38.3 & 35.4 & 32.0  \\
\hline
\end{tabular}
}
\vspace{-0.3cm}
\end{table}

\begin{table}[t]
\centering
\caption{Comparison of distillation performance with and without Hungarian matching on the MOVi-E dataset. Omitting Hungarian matching (as proposed) leads to better performance.}
\label{table:hungarian_matching}
\vspace{-0.25cm}
\small{
\setlength{\tabcolsep}{5pt} 
\begin{tabular}{|l|cc|cc|}
\hline
\multirow{5}{*}{Method} & \multicolumn{2}{c|}{Image} & \multicolumn{2}{c|}{Video} \\
\cline{2-5}
 & \rotatebox{90}{FG-ARI $\uparrow\,$} & \rotatebox{90}{mBO $\uparrow$} & \rotatebox{90}{FG-ARI $\uparrow$}  & \rotatebox{90}{mBO $\uparrow$} \\
\hline
\hline
\textsc{SlotContrast} (Teacher) & 83.9 & 32.4 & 81.7 & 28.6 \\
\hline
Student (no KD) & 80.1 & 28.9 & 76.6 & 30.1 \\
\textsc{SlotMatch} (w/ matching) & 83.5 & 31.8 & 81.5 & 28.6 \\
\textsc{SlotMatch} (w/o matching) & \textbf{84.1} & \textbf{33.6} & \textbf{81.8} & \textbf{30.5} \\
\hline
\end{tabular}
}
\vspace{-0.35cm}
\end{table}


\noindent \textbf{With or without reconstruction KD.} To empirically determine if the proposed $\mathcal{L}_{\text{slot-KD}}$ loss is sufficient or not, we carry out experiments with an enhanced version of \textsc{SlotMatch}, where the distillation is also applied over reconstructed features via the $\mathcal{L}_{\text{rec-KD}}$ loss (see Table~\ref{table:main_results}). Perhaps surprisingly, this double distillation procedure degrades performance by considerable margins on MOVi-E. The results show that adding additional distillation losses can make the distillation more difficult to tune, eventually leading to inferior results. In summary, the empirical results confirm Theorem \ref{prop_slot_distill}, indicating that distillation via $\mathcal{L}_{\text{slot-KD}}$ is sufficient.

\noindent \textbf{Distillation weight.}
We sweep the distillation loss weight $\beta$ and obtain the best performance at $\beta=0.2$ (see Table~\ref{table:weight_distil}). Too little weight underutilizes the teacher signal, while too much harms diversity by forcing alignment too strongly.

\noindent \textbf{Hungarian vs.~direct index matching.}
As slots are inherently permutation-invariant, there is no guarantee for one-to-one alignment between student and teacher slots. This presents a challenge for direct knowledge transfer. To this end, we explore two matching strategies: (i) aligning slots by index positions (proposed option) or (ii) using the Hungarian algorithm to find the most similar slot pairs based on their features, before applying the loss. As shown in Table~\ref{table:hungarian_matching}, distillation without Hungarian matching yields better performance in our setup, while also being more computationally efficient. A likely cause for this outcome is that strict matching over-constrains the learning process, negatively impacting the convergence of the distillation model and leading to suboptimal results. Therefore, all results reported in the experiments omit the Hungarian matching step.

\vspace{-0.1cm}
\section{Conclusion}
\vspace{-0.1cm}

In this work, we introduced \textsc{SlotMatch}, a simple and effective framework for distilling slot-based object representations from a large teacher model into a lightweight student. By aligning slots directly via cosine similarity, our method avoids auxiliary objectives. We showed both theoretically and empirically that this objective is sufficient to transfer semantic structure, leading to a student that outperforms its teacher in segmentation quality, while being significantly more efficient. Our results on MOVi-E, YTVIS and DAVIS established new state-of-the-art performance levels among unsupervised slot-based video models. In future work, we aim to continue our research in scaling down object-centric models for real-world deployment.

{
    \small
    \bibliographystyle{ieeenat_fullname}
    \bibliography{main}
}

\appendix

In the supplementary, we include the demonstration for Theorem \ref{prop_slot_distill}, additional quantitative and qualitative results, as well as reproducibility details.

\section{Theoretical Demonstration}

We next provide the proof of our theorem. 


\begin{proof}
The cosine similarity between two (teacher and student) slots is defined as:
\begin{equation}\label{eq_cos_sim}
\cos \theta = \frac{\langle s^\mathbf{T}\!,s^\mathbf{S} \rangle}{\|s^\mathbf{T}\| \cdot \|s^\mathbf{S}\|}.
\end{equation}

By replacing the definition from Eq.~\eqref{eq_cos_sim} in Eq.~\eqref{eq_slot_distill}, we obtain the following:
\begin{equation}
\mathcal{L}_{\text{slot-KD}}\left(s^\mathbf{T},s^\mathbf{S}\right) = 1 - \cos \theta = c,
\end{equation}
which implies that:
\begin{equation}\label{eq_cos_rel}
\cos \theta = 1 - c. 
\end{equation}

We next express the sum of squared errors (squared Euclidean distance) between slots $\left\|s^\mathbf{T}\!-\!s^\mathbf{S}\right\|^2$ in terms of $\theta$. We start from the following definition:
\begin{equation}\label{eq_euclid_relation}
\left\|s^\mathbf{T}\!-\!s^\mathbf{S}\right\|^2 = \left\|s^\mathbf{T}\right\|^2 + \left\|s^\mathbf{S}\right\|^2 - 2 \cdot \langle s^\mathbf{T}\!,s^\mathbf{S} \rangle.
\end{equation}
From Eq.~\eqref{eq_cos_sim}, the scalar product between slots can be written as follows:
\begin{equation}
\langle s^\mathbf{T}\!,s^\mathbf{S} \rangle = \|s^\mathbf{T}\| \cdot \|s^\mathbf{S}\| \cdot \cos \theta.
\end{equation}
Hence, Eq.~\eqref{eq_euclid_relation} becomes: 
\begin{equation}\label{eq_euclid_relation_updated}
\left\|s^\mathbf{T}\!-\!s^\mathbf{S}\right\|^2 = \left\|s^\mathbf{T}\right\|^2 + \left\|s^\mathbf{S}\right\|^2 - 2 \cdot \|s^\mathbf{T}\| \cdot \|s^\mathbf{S}\| \cdot \cos \theta.
\end{equation}
By employing the following the assumption, specified in Theorem 1:
\begin{equation}
\left\|s^\mathbf{T}\right\| = \left\|s^\mathbf{S}\right\| = r,
\end{equation}
we obtain:
\begin{equation}\label{eq_dist_theta_rel}
\begin{split}
\left\|s^\mathbf{T}\!-\!s^\mathbf{S}\right\|^2 &= 2\cdot r^2 - 2 \cdot r^2 \cos \theta\\
&= 2 \cdot r^2 \cdot (1 - \cos \theta). 
\end{split}
\end{equation}
By substituting Eq.~\eqref{eq_cos_rel} in Eq.~\eqref{eq_dist_theta_rel}, we obtain the following:
\begin{equation}\label{eq_dist_theta_rel_2}
\left\|s^\mathbf{T}\!-\!s^\mathbf{S}\right\|^2 = 2 \cdot r^2 \cdot c.
\end{equation}

The Lipschitz property of $f$ gives us the following inequality:
\begin{equation}\label{eq_lip_ineq}
\left\|f\!\left(s^\mathbf{T}\right)\!-\!f\!\left(s^\mathbf{S}\right)\!\right\|\!\leq K_f \cdot \left\|s^\mathbf{T}\!-\!s^\mathbf{S}\right\|.
\end{equation}
Squaring both sides leads to:
\begin{equation}\label{eq_lip_ineq_square}
\left\|f\!\left(s^\mathbf{T}\right)\!-\!f\!\left(s^\mathbf{S}\right)\!\right\|^2\!\leq K_f^2 \cdot \left\|s^\mathbf{T}\!-\!s^\mathbf{S}\right\|^2.
\end{equation}
Note that the left term in Eq.~\eqref{eq_lip_ineq_square} is equal to $\mathcal{L}_{\text{rec-KD}}$, \ie:
\begin{equation}\label{eq_bound}
\mathcal{L}_{\text{rec-KD}}\!\left(f\!\left(s^\mathbf{T}\right)\!,f\!\left(s^\mathbf{S}\right)\!\right)\leq K_f^2 \cdot \left\|s^\mathbf{T}\!-\!s^\mathbf{S}\right\|^2
\end{equation}
We next substitute Eq.~\eqref{eq_dist_theta_rel_2} inside Eq.~\eqref{eq_bound} and obtain:
\begin{equation}\label{eq_bound_2}
\mathcal{L}_{\text{rec-KD}}\!\left(f\!\left(s^\mathbf{T}\right)\!,f\!\left(s^\mathbf{S}\right)\!\right)\leq 2 \cdot K_f^2 \cdot r^2 \cdot c.
\end{equation}
Let $K=2 \cdot K_f^2 \cdot r^2$. Finally, we obtain:
\begin{equation}\label{eq_bound_3}
\mathcal{L}_{\text{rec-KD}}\!\left(f\!\left(s^\mathbf{T}\right)\!,f\!\left(s^\mathbf{S}\right)\!\right)\leq K \cdot c,
\end{equation}
which concludes the proof of Theorem 1. 
\end{proof}

\begin{figure*}[!t]
  \centering
  \includegraphics[width=0.88\linewidth]{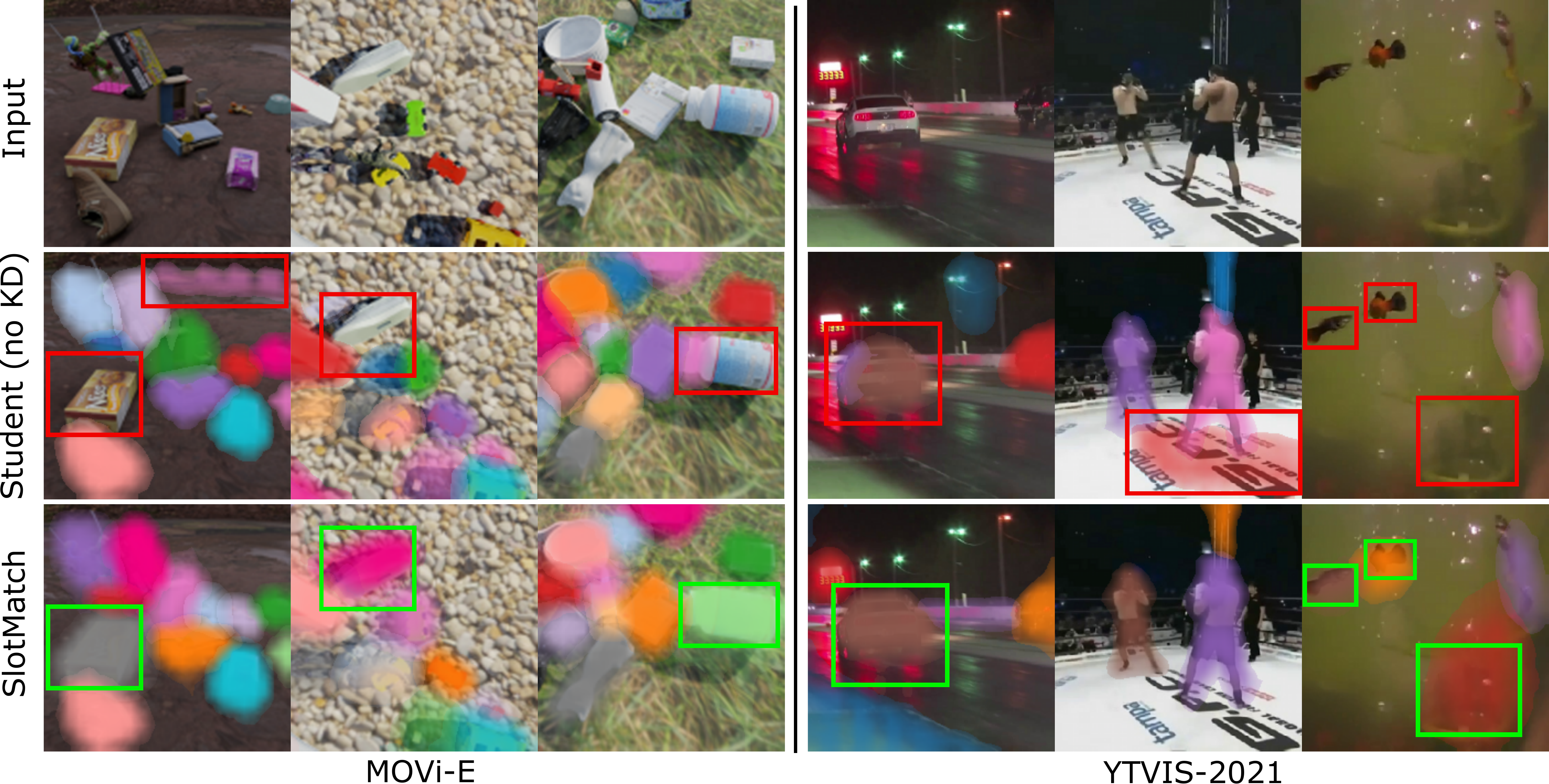}
      \vspace{-0.1cm}
  \caption{Qualitative comparison on MOVi-E (left) and YTVIS-2021 (right). The second row shows outputs from the student model, while the third row presents results from our distillation-based \textsc{SlotMatch}. Student errors, including missed slots, are marked in {\color{red}red}. Corrections and additional slots introduced by \textsc{SlotMatch} are highlighted in {\color{Green}green}. Best viewed in color.} 
  \label{fig:visual_results_supp}
      \vspace{-0.2cm}
\end{figure*}

\begin{table}[t]
\centering
\caption{Video segmentation results on DAVIS \cite{Perazzi-CVPR-2016} with two different teacher models, VideoSAUR \cite{Zadaianchuk-NeurIPS-2023} and VideoSAURv2 \cite{Manasyan-CVPR-2025}. The top two scores for each metric are highlighted in \best{\textbf{blue bold}} (top method), \secondbest{\textbf{orange bold}} (second best).}
\label{table:additional_results_teachers}
\vspace{-0.2cm}
\small{
\setlength{\tabcolsep}{3pt} 
\begin{tabular}{|l|c|c|}
\hline
Method & {Image mBO $\uparrow$} & {Video mBO $\uparrow$} \\
\hline
\hline
VideoSAUR \cite{Zadaianchuk-NeurIPS-2023} (teacher) & \secondbest{\textbf{23.9}} & \secondbest{\textbf{18.4}} \\
Student (no KD) & 17.4 & 14.4 \\
\textsc{SlotMatch} (ours) & \best{\textbf{30.3}} & \best{\textbf{23.2}} \\
\hline
VideoSAURv2 \cite{Manasyan-CVPR-2025} (teacher) & \secondbest{\textbf{23.0}} & \secondbest{\textbf{19.2}} \\
Student (no KD) & 22.9  & 18.2 \\
\textsc{SlotMatch} (ours) & \best{\textbf{31.2}} & \best{\textbf{28.9}} \\
\hline
\end{tabular}
}
\end{table}

Remarks:
\begin{itemize}
    \item If the slots are normalized (\ie~$r = 1$), the bound simplifies to:
\begin{equation}\label{eq_bound_2}
\mathcal{L}_{\text{rec-KD}}\!\left(f\!\left(s^\mathbf{T}\right)\!,f\!\left(s^\mathbf{S}\right)\!\right)\leq 2 \cdot K_f^2 \cdot c.
\end{equation}
    \item In neural networks, the constant $K_f$ depends on the weights and activations of the model, and it can be estimated via spectral norms.
\end{itemize}

\section{Additional Results}

\noindent
\textbf{Qualitative results.} 
In Figure~\ref{fig:visual_results_supp}, we showcase qualitative comparisons of segmentation masks on challenging examples from MOVi-E and YTVIS. Our method produces sharper and more temporally-consistent masks than the student model without distillation. On MOVi-E, \textsc{SlotMatch} segments overlapping or partially occluded objects more cleanly, while on real-world YTVIS data, it shows improved boundary alignment and fewer slot collisions. These results highlight the benefit of slot-level supervision in guiding the student towards focusing on meaningful object structure.

\begin{table}[t]
\centering
\caption{Image segmentation results on MS COCO \cite{Lin-ECCV-2014} with DINOSAURv2 \cite{Seitzer-arxiv-2022} as teacher. The top two scores for each metric are highlighted in \best{\textbf{blue bold}} (top method), \secondbest{\textbf{orange bold}} (second best).}
\label{table:additional_results_image}
\vspace{-0.2cm}
\small{
\setlength{\tabcolsep}{3pt} 
\begin{tabular}{|l|c|c|}
\hline
Method & {Image FG-ARI $\uparrow$}  & {Image mBO $\uparrow$}\\
\hline
\hline
DINOSAURv2 \cite{Seitzer-arxiv-2022} (teacher) & \best{\textbf{44.7}} & \best{\textbf{29.3}} \\
Student (no KD) & 41.5 & 28.6 \\
\textsc{SlotMatch} (ours) & \secondbest{\textbf{42.3}} & \secondbest{\textbf{28.8}} \\
\hline
\end{tabular}
}
\vspace{-0.1cm}
\end{table}

\noindent
\textbf{Results with different teachers.}
In Table \ref{table:additional_results_teachers}, we report results with additional teacher models, namely VideoSAUR \cite{Zadaianchuk-NeurIPS-2023} and VideoSAURv2 \cite{Manasyan-CVPR-2025}, on the DAVIS \cite{Perazzi-CVPR-2016} dataset. The \textsc{SlotMatch} student based on DINOv2 is compared with an equivalent architecture without distillation (no KD), as well as its corresponding teacher model. The results follow the same trend as in the main experiments, \ie~\textsc{SlotMatch} consistently outperforms both the student (no KD) and teacher models. These results confirm that \textsc{SlotMatch} generalizes across multiple teacher models.

\begin{table*}[t]
\centering
\caption{Summary of key hyperparameters for all teacher and student models used in our experiments. All models use the same decoder, predictor, and loss weights.}
\label{tab:all_hparams}
\vspace{-0.2cm}
\small
\begin{tabular}{|l|c|c|c|c|c|c|}
\hline
\multirow{2}{*}{{Hyperparameter}} & \multicolumn{2}{c|}{{MOVi-E}} & \multicolumn{2}{c|}{{YTVIS-2021}} & \multicolumn{2}{c|}{{DAVIS 20217}} \\
\cline{2-7}
& {Teacher} & {Student} & {Teacher} & {Student} & {Teacher} & {Student} \\
\hline
\hline
Backbone & ViT-B/14 & ViT-S/14 & ViT-B/14 & ViT-S/14 & ViT-B/14 & ViT-S/14  \\
Feature size ($m$) & 768 & 384 & 768 & 384 & 768 & 384 \\
Slot dim ($S$) & 128 & 128 & 64 & 64 & 128 & 128 \\
\#Slots ($N$) & 15 & 15 & 7 & 7 & 4 & 4 \\
Input size & 336$\times$336 & 336$\times$336 & 518$\times$518 & 518$\times$518 & 336$\times$336 & 336$\times$336 \\
\#Patches & 576 & 576 & 1369 & 1369 & 576 & 576 \\
Batch size ($B$) & 16 & 16 & 64 & 64 & 16 & 16 \\
Learning rate ($\eta$) & 0.0004 & 0.0004 & 0.0008 & 0.0008 & 0.0008 & 0.0008 \\
Total steps & 300K & 300K & 100K & 100K & 100K & 100K \\
\cline{2-7}
Loss weights ($\alpha, \beta$) & \multicolumn{6}{c|}{(1, 0.2)} \\
Slot attention iterations & \multicolumn{6}{c|}{2} \\
Contrastive temperature ($\tau$) & \multicolumn{6}{c|}{0.1} \\
Gradient clip & \multicolumn{6}{c|}{0.05} \\
Predictor & \multicolumn{6}{c|}{Transformer (1$\times$4)} \\
\hline
\end{tabular}
\vspace{-0.1cm}
\end{table*}

\noindent
\textbf{Image segmentation results.} In Table \ref{table:additional_results_image}, we present image segmentation results on the MS COCO \cite{Lin-ECCV-2014} dataset,  employing DINOSAURv2 \cite{Seitzer-arxiv-2022} as teacher. \textsc{SlotMatch} surpasses the analogous model (without distillation), but remains below the teacher. We conclude that \textsc{SlotMatch} can obtain competitive results in image segmentation, with a reduced computational cost. Yet, the true benefit of \textsc{SlotMatch} is more obvious in video segmentation, where its ability to maintain temporal consistency of slots plays a key role.
 


\section{Reproducibility Details}

\paragraph{Model configurations.}
In Table~\ref{tab:all_hparams}, we provide a complete overview of model and training hyperparameters used across all experiments. The teacher and student models share a common slot attention architecture, decoder, and loss structure. For both datasets, the student differs by using a ViT-S/14 encoder with lower feature dimensionality. Other parameters such as slot count, learning rate, and contrastive loss temperature are held constant. These settings enable consistent and reproducible evaluation of our \textsc{SlotMatch} distillation approach.

\vspace{-0.3cm}
\paragraph{Training time and compute resources.}
All experiments were conducted using two NVIDIA A100 GPUs (each with 40GB of VRAM). Training the student model on MOVi-E for 300K steps took approximately 138 hours. For YTVIS-2021, training required about 39 hours for 100K steps. 
All models were implemented in PyTorch Lightning. Total GPU hours for each experiment are summarized in Table~\ref{tab:compute}.

\begin{table}[t]
\centering
\caption{Approximate training time per experiment on two A100 GPUs.}
\label{tab:compute}
\vspace{-0.2cm}
\small
\setlength{\tabcolsep}{4pt}
\begin{tabular}{|l|l|c|c|}
\hline
{Method} & {Dataset} & {Steps} & {GPU Hours} \\
\hline
\hline
\multirow{3}{*}{\textsc{SlotMatch}} & MOVi-E & 300K & $\approx$ 138 \\
& YTVIS & 100K & $\approx$ 39 \\
& DAVIS & 100K & $\approx$ 25 \\
\hline
\end{tabular}
\end{table}

\begin{table}[t]
\centering
\caption{Robustness evaluation of \textsc{SlotMatch} on the YTVIS dataset using three random seeds (42, 101, 2048). The last row reports the average performance across seeds. Consistent results across runs indicate stable training.}
\label{table:random_seed}
\vspace{-0.2cm}
\small{
\begin{tabular}{|c|c|cc|}
\hline
{Run} & {Seed} & {Video FG-ARI $\uparrow$}  & {Video mBO $\uparrow$} \\
\hline
\hline
Run 1 & 42 & 37.2 & 33.1 \\
Run 2 & 101  & 37.4 & 32.4 \\
Run 3 & 2048 & 37.3 & 32.9 \\
\hline
\textbf{Mean} & - & 37.3 & 32.8 \\
\textbf{Std.} & - & $\pm$0.06 & $\pm$0.29 \\
\hline
\end{tabular}
}
\end{table}

\vspace{-0.3cm}
\paragraph{Random seeds and repeated runs.}
To evaluate the robustness of our method, all reported results are averaged across three runs using the following seeds: 42, 101 and 2048. Reported results in Table \ref{table:random_seed} reflect the average performance across these runs. In preliminary experiments, we found the standard deviation for FG-ARI, and mBO across seeds to be within $\pm$0.06 and $\pm$0.29 on YTVIS, indicating stable convergence behavior. All random seeds were set using PyTorch Lightning's \texttt{seed\_everything} function, as well as each independent module's respective seed function, to ensure full reproducibility.

\vspace{-0.3cm}
\paragraph{Data access and preprocessing.}
We used public benchmark datasets: MOVi-E \cite{ghorbani2021movi}, YTVIS-2021 \cite{vis2021}, DAVIS \cite{Perazzi-CVPR-2016} and OVIS \cite{qi2022occluded}. All datasets are publicly available and can be downloaded from the respective official repositories. Preprocessing for MOVi-E follows the original 336$\times$336 center crop and normalization to $[-1, 1]$. For YTVIS, frames are resized using short-side resizing to 518 pixels with central cropping. OVIS is used for evaluation only. Temporal chunks of 4 frames are sampled per video for both training and evaluation. The corresponding preprocessing and data-related details are present in the configuration files. 

\vspace{-0.3cm}
\paragraph{Pretrained models and licensing.}
We will release pretrained weights for both teacher and student models on all datasets in our code repository. Checkpoints will be provided with instructions for loading and evaluation. The code is released under the 
CC-BY-NC 4.0 license, and all dependencies are listed in the \texttt{environment.yml} file. The repository includes inference scripts, training pipelines, and evaluation tools for FG-ARI and mBO, along with an example configuration to reproduce results.

\end{document}